\documentclass[letterpaper, 10 pt, conference]{ieeeconf}  % Comment this line out if you need a4paper

\IEEEoverridecommandlockouts                              % This command is only needed if 
\usepackage{float}
\usepackage{graphics} % for pdf, bitmapped graphics files
\usepackage{epsfig} % for postscript graphics files
\usepackage{times} % assumes new font selection scheme installed
\usepackage{makecell}
\usepackage{pifont}% http://ctan.org/pkg/pifont
\usepackage{array}
\usepackage{array}
\usepackage{tikz}
\usepackage{stmaryrd} % for special math symbols
\usetikzlibrary{positioning}
\usepackage{tabularx}
\usepackage{xcolor}
\usepackage{amsmath}
\usepackage{amsfonts}
\usepackage{booktabs}
\usepackage{mathtools}
\usepackage{stmaryrd}
\usepackage{algorithm}
\usepackage{algpseudocode}
\usepackage{tcolorbox}

\usepackage{wrapfig}
\usepackage{tikz}
\usepackage{graphicx}
\usepackage{svg}
\allowdisplaybreaks
\usepackage{amssymb}
\usepackage{moresize}
\usepackage[utf8]{inputenc}
\usepackage{etoolbox}
\usepackage{lipsum}
\usepackage{scalerel}
\usepackage{epsfig}
\usepackage{cite}
\usepackage{graphicx}
\usepackage{booktabs}
\usepackage[dvipsnames]{xcolor}
\usepackage{siunitx}
\usepackage[hidelinks]{hyperref}
\hypersetup{
  colorlinks   = true, %Colours links instead of ugly boxes
  urlcolor     = blue, %Colour for external hyperlinks
  linkcolor    = black, %Colour of internal links
  citecolor   = black %Colour of citations
}
\usepackage{tabularx}
\usepackage{amsfonts}
\usetikzlibrary{shapes.geometric, arrows}
\usetikzlibrary{calc,fit,arrows}
\newtheorem{theorem}{Theorem}[section]

\newtheorem{problem}[theorem]{Problem}

\usepackage{accents}
\newtheorem{lemma}[theorem]{Lemma}
\newtheorem{definition}[theorem]{Definition}
\usepackage{newtxmath}
\usepackage{enumerate}
\algrenewcommand\algorithmicrequire{\textbf{Input:}}
\algrenewcommand\algorithmicensure{\textbf{Output:}}
\usepackage{graphicx}
\usepackage{textcomp}
\usepackage{xcolor}
\usepackage{svg}
\def\BibTeX{{\rm B\kern-.05em{\sc i\kern-.025em b}\kern-.08em
    T\kern-.1667em\lower.7ex\hbox{E}\kern-.125emX}}

\newcommand{\bb}

\algdef{SE}[SUBALG]{Indent}{EndIndent}{}{\algorithmicend\ }%
\algtext*{Indent}
\algtext*{EndIndent}

\title{Tractable Reinforcement Learning for Full Class of Signal Temporal Logic Specifications Using Spatiotemporal Tube Reward}

\author{Vaishnavi Jagabathula$^{\ddag,1}$,P Sangeerth$^{\ddag,  1}$, Pushpak Jagtap$^1$ % <-this % stops a space
\thanks{This work was supported in part by an ANRF grant and Siemens.}
\thanks{$^\ddag$Authors contributed equally.}
\thanks{
$^1$ Centre for Cyber-Physical Systems, IISc, Bangalore, India} 
\thanks{{\tt\footnotesize\{vaishnavij,sangeerthp, pushpak\}@iisc.ac.in}}%
}

\begin{document}

\maketitle
\thispagestyle{empty}
\pagestyle{empty}

%%%%%%%%%%%%%%%%%%%%%%%%%%%%%%%%%%%%%%%%%%%%%%%%%%%%%%%%%%%%%%%%%%%%%%%%%%%%%%%%
\begin{abstract}
This paper addresses the control problem for robotic systems, including non-holonomic and underactuated platforms operating under unknown dynamics and strict actuator limits to satisfy complex high-level specifications. We denote these high-level specifications using Signal Temporal Logic (STL) and propose a novel time-aware Reinforcement Learning (RL) framework that leverages the geometric properties of Spatiotemporal Tubes (STTs). While traditional analytical STT controllers often struggle to enforce input constraints, and existing RL approaches rely on memory-intensive state history, our method natively overcomes both limitations. By mapping the logical and temporal complexities of the full class of STL into time-varying geometric boundaries, we directly constrain the multidimensional system state without relying on scalar robustness metrics. Augmenting the state space with time, we train a time-aware Soft Actor-Critic (SAC) agent using a continuous, geometry-aware reward function that eliminates the need to explicitly evaluate complex logical semantics during execution. The proposed framework offers a history-free, computationally efficient approach to learn continuous control policies that ensure robust satisfaction of specifications while strictly adhering to system input constraints.
\end{abstract}

\section{Introduction}\label{sec: introduction}

There is an increasing complexity in robotic task assignment across various industries. This requires more expressive, logic-based specifications to describe a system's desired behavior to ensure successful task completion. Signal Temporal Logic (STL) \cite{10.1007/978-3-540-30206-3_12} is a powerful and expressive framework for the quantitative monitoring of a task's spatial, and temporal specifications. Robotic missions involving ground vehicles or spacecraft that execute complex temporal tasks, such as sequential goal-reaching while avoiding environmental obstacles, can be successfully captured using STL semantics along with offering a quantitative evaluation of task performance using robustness metrics \cite{donze2010robust}. 

To synthesize controllers for these complex temporal tasks without imposing restrictive assumptions on the system's mathematical model, Reinforcement Learning (RL) \cite{sutton2018reinforcement} has gained significant attention. However, designing an RL reward function for STL specifications is notoriously challenging because STL is inherently non-Markovian; evaluating temporal satisfaction usually requires knowledge of the system's past trajectory. Existing abstraction-based synthesis tools \cite{rungger2016scots} can solve such navigation problems, but they require known system dynamics, computationally expensive grid discretizations, and yield highly conservative controllers after tedious post-processing as they natively support only Linear Temporal Logic (LTL) specifications. To apply RL directly, some methods, such as the Q-learning approach in \cite{7799279}, evaluate STL by storing state history over a finite window, which is limited by the curse of dimensionality. Other approaches attempt to maintain a Markovian state space but can only enforce a fragment of STL, such as finite-state MDPs \cite{pmlr-v120-venkataraman20a} or funnel-based reward shaping \cite{10354421}.

To bypass the complexities of evaluating logical semantics during execution, recent literature has explored mapping STL into a robustness function. A prescribed performance control (PPC) framework in \cite{STL_PPC} designs funnel controller based on robustness metrics for control-affine nonlinear systems. Expanding on this, the authors of \cite{STT_approx_free} proposed Spatiotemporal Tubes (STT) to synthesize computationally efficient controllers for the full class of STL by framing the constraints as an optimization problem. Similar analytical tube-based approaches have been developed for multi-input multi-output (MIMO) systems \cite{STT_MIMO} and specific differential drive tasks \cite{das2025full_class, das2026temporalreachavoidstaycontroldifferential}. However, these analytical STT methods rely heavily on strong structural assumptions, such as requiring the system to be control-affine or fully actuated. These controllers cannot be applied to underactuated or non-holonomic systems. These strict analytical controllers often fail due to mathematical singularities when subjected to disturbances. Furthermore, they traditionally struggle to strictly enforce physical actuator limits (input constraints). 

In this work, we address the above concerns by designing an RL-based controller that satisfies a full class of STL specifications for a general class of systems, including underactuated and non-holonomic systems, under input constraints, using the spatiotemporal tubes approach. We map the full class of STL specifications into time-varying spatiotemporal tubes and formulate a continuous, distance-based reward function. This entirely abstracts the logical and temporal complexity of the STL formula into a purely spatial representation at any given time $t$. By augmenting the system state with time, we develop a Time-Aware Soft Actor-Critic (SAC) algorithm that evaluates task satisfaction instantaneously relative to the tube bounds, eliminating the need to store trajectory history and enabling highly scalable learning. Furthermore, our RL-based framework implicitly handles non-holonomic and underactuated dynamics, handling disturbances, while strictly enforcing control input constraints via bounded actor network outputs, unlike analytical tube-based controllers. 
% The main contributions of this work are summarized as follows:
% % \begin{enumerate}
% %     \item {STT-Guided Reward Shaping}
% %     \item {History-Free time-aware RL} 
% %     \item {Robustness to complex dynamics and input constraints} 
% % \end{enumerate}
% \begin{enumerate}
%     \item \textbf{STT-Guided Reward Shaping:} We map the full class of STL specifications into time-varying spatiotemporal tubes and formulate a continuous, distance-based reward function. This entirely abstracts the logical and temporal complexity of the STL formula into a purely spatial representation at any given time $t$.
%     \item \textbf{History-Free Time-Aware RL:} By augmenting the system state with time, we develop a Time-Aware Soft Actor-Critic (SAC) algorithm that evaluates task satisfaction instantaneously relative to the tube bounds, eliminating the need to store trajectory history and enabling highly scalable learning.
%     \item \textbf{Robustness to complex dynamics and input constraints:} Unlike analytical tube-based controllers, our RL-based framework implicitly handles non-holonomic and underactuated dynamics, handling disturbances, while strictly enforcing control input constraints via bounded actor network outputs. 
% \end{enumerate}
We validate our method with three robotic case studies-(i) differential drive robot, (ii) a cartpole, and (iii) a spacecraft model.

\section{Preliminaries}
\label{preliminaries}
\textbf{Notations:}
We denote sets of nonnegative integers, positive real numbers, and non-negative real numbers, respectively, by $\mathbb{N}:=\{0,1,2,3,\dots\}$, $\mathbb{R}^+$, and $\mathbb{R}^+_0$. The symbol ${\mathbb{R}^n}$ is used to denote an $n$-dimensional Euclidean space. We define intervals in between 2 natural numbers $a,b \in \mathbb{N}$ as $[a,b]_{\mathbb{N}}$, where $a<b$. %A column vector $x \in \mathbb{R}^n$ is denoted by $x\hspace{-0.2em}=\hspace{-0.2em}[x_1;x_2;\cdots;x_n]$. The Euclidean norm of $x \in \mathbb{R}^n$ is represented by $\lVert x\rVert$. 

\subsection{System Dynamics}
In this paper, we consider a nominal mathematical model represented by a discrete-time system \cite{ogata1995discretetime}, given by
% \vspace{-1em}
\begin{equation}
\label{eqn: disc_mathematical_model}
    {\Sigma}\!: x^+ = {f}(x,u),
    % \vspace{-1em}
\end{equation}
where $x^+$ represents the state variables at the next time step, \emph{i.e.,} $x^+ \coloneq x(t + 1), \; t \in \mathbb{N}$.  Moreover, $x =[x^1; x^2; \cdots ; x^n] \in  \mathbb{R}^n$ is the state vector, $u=[u^1;u^2;\cdots; u^m] \in U\subset \mathbb{R}^m$ is the input vector in the input set $U$, and ${f}:\mathbb{R}^n \times \textit{U} \rightarrow \mathbb{R}^n$ is an unknown transition map. Furthermore, we assume that the full state vector $x$ is measurable at each time step. Building on this system model, we next define the STL specifications that serve as the satisfaction objective for our RL-based control synthesis.
% Having explained the system description, we next describe the STL specification for which we aim to synthesize a controller using RL in this work.

\subsection{Signal Temporal Logic (STL)}

Signal Temporal Logic (STL) \cite{10.1007/978-3-540-30206-3_12} is a formal language used to specify the spatial, and temporal properties of continuous-time signals. The set of STL formulae can be recursively expressed using predicates $\mathsf{p}$. Consider the predicate function $h : \mathbb{R}^n \rightarrow \mathbb{R}$, then $\mathsf{p} := \text{true}$, if $h(x) \geq 0$, and $\text{false}$ if $h(x) < 0$. An STL formula $\phi$ is recursively defined using predicates, Boolean logic, and temporal operators:
\begin{equation}
    \phi := \text{true} \mid \mathsf{p} \mid \neg\phi \mid \phi_1 \wedge \phi_2 \mid \phi_1 \mathcal{U}_{[a,b]} \phi_2, 
    \label{eqn: stl_eqn}
\end{equation}
where $\mathsf{p}$ is a predicate, and $\phi_1, \phi_2$ are STL formulae. Boolean operators for negation, and conjunction are denoted by $\neg$, and $\wedge$. $\mathcal{U}$ represent the temporal until operator in the time interval $[a, b]$ with $a, b \in \mathbb{R}_{\geq 0}$ such that $a \leq b$. The satisfaction relation $(x, t) \models \phi$ denotes if a signal $x : \mathbb{R}_{\geq 0} \rightarrow \mathbb{R}^n$, possibly a solution of (1), satisfies an STL formula $\phi$ at time $t$. The STL semantics \cite{10.1007/978-3-540-30206-3_12} for a signal $x$ is recursively given by:
\begin{align*}
    (x, t) &\models \mathsf{p} \Leftrightarrow h(x(t)) \geq 0, \\
    (x, t) &\models \neg\phi \Leftrightarrow \neg((x, t) \models \phi), \\
    (x, t) &\models \phi_1 \wedge \phi_2 \Leftrightarrow (x, t) \models \phi_1 \wedge (x, t) \models \phi_2, \\
    (x, t) &\models \phi_1 \mathcal{U}_{[a,b]} \phi_2 \Leftrightarrow \exists t_1 \in [t + a, t + b], (x, t_1) \models \phi_2 \\
    &\qquad \qquad \qquad \quad \wedge \forall t_2 \in [t + a, t_1], (x, t_2) \models {\phi_1}.
\end{align*}

The disjunction, eventually, and always operator can be derived as $\phi_1 \vee \phi_2 = \neg(\neg\phi_1 \wedge \neg\phi_2)$, $\Diamond_{[a,b]}\phi = \text{true } \mathcal{U}_{[a,b]} \phi$, and $\Box_{[a,b]}\phi = \neg\Diamond_{[a,b]}\neg\phi$. A signal $x$ satisfies an STL formula $\phi$, denoted $x \models \phi$, if and only if $(x, 0) \models \phi$. The STL robustness metric $\rho^\phi(x, t)$ \cite{donze2010robust} quantifies the degree of satisfaction: $\rho^\phi(x, t) > 0$ implies $(x, t) \models \phi$, and its magnitude reflects the strength of satisfaction or violation. The robustness semantics are defined recursively as:
\begin{subequations}
\setcounter{equation}{0}
\begin{align}
    \rho^{\mathsf{p}}(x, t) &= h(x(t)), \label{eq:3a} \\
    \rho^{\neg\phi}(x, t) &= -\rho^\phi(x, t), \label{eq:3b} \\
    \rho^{\phi_1 \wedge \phi_2}(x, t) &= \min \left(\rho^{\phi_1}(x, t), \rho^{\phi_2}(x, t)\right), \label{eq:3c} \\
    \rho^{\Diamond_{[a,b]}\phi}(x, t) &= \max_{t_1 \in [t+a, t+b]} \rho^\phi(x, t_1), \label{eq:3d} \\
    \rho^{\Box_{[a,b]}\phi}(x, t) &= \min_{t_1 \in [t+a, t+b]} \rho^\phi(x, t_1), \label{eq:3e} \\
    \rho^{\phi_1 \mathcal{U}_{[a,b]} \phi_2}(x, t) &= \max_{t_1 \in [t+a, t+b]} \min \Big( \rho^{\phi_1}(x, t_1), \nonumber \\
    &\qquad \qquad \qquad \min_{t_2 \in [t+a, t_1]} \rho^{\phi_2}(x, t_2) \Big). \label{eq:3f}
\end{align}
\end{subequations}

% We consider a finite time horizon $[0, t_f]$ over which the specification $\phi$ is to be realized, i.e., the signal $x : [0, t_f] \rightarrow \mathbb{R}^n$ is such that $x \models \phi$.

In the next section, we introduce the concept of Markov Decision Process (MDPs) before introducing the problem statement. 

\subsection{Markov Decision Process (MDP) Formulation}
To synthesize a controller for the unknown discrete-time system $\Sigma$ using reinforcement learning, we frame the sequential decision-making problem as a Markov Decision Process (MDP) \cite{sutton2018reinforcement}. An MDP is defined by the tuple $\mathcal{M}=(\mathcal{S},\mathcal{A},\mathcal{P},r,\gamma)$.

% We establish a direct equivalence between the system dynamics in \eqref{eqn: disc_mathematical_model} and this MDP framework. The continuous compact state-space $\mathcal{S}\subset\mathbb{R}^{n}$ corresponds directly to the system state-space, \textit{(i.e.)} $x \in \mathcal{S} \subset \mathbb{R}^n$. The continuous action space $\mathcal{A}:=U$ represents the allowable control inputs. The state transition probability distribution $\mathcal{P}(x^+|x,u)$ acts as a stochastic representation of the unknown system dynamics. Since the underlying system $\Sigma$ is deterministic, $\mathcal{P}(x^+|x,u)$ reduces to a Dirac delta distribution centered at the unknown transition map $f(x, u)$. Finally, $r:\mathcal{S}\times\mathcal{A}\rightarrow\mathbb{R}$ is the reward function, and $\gamma\in(0,1)$ is the discount factor. Under this formulation, the control synthesis objective translates to finding a stochastic policy $\pi(u|x)$ that selects optimal continuous control inputs $u \in \mathcal{A}$ to maximize the cumulative expected reward.

We establish a direct mapping between the system dynamics in \eqref{eqn: disc_mathematical_model} and the MDP framework. While the theoretical state vector evolves in $\mathbb{R}^n$, practical learning and robotic tasks are confined to a bounded region. Therefore, we define the MDP state space as a compact domain $\mathcal{S} \subset \mathbb{R}^n$, mapping the MDP state directly to the system state such that $x \in \mathcal{S}$. The continuous action space is defined identically to the physical input constraint set, \emph{i.e.,} $\mathcal{A} \coloneq U$. For any given state $x \in \mathcal{S}$ and action $u \in \mathcal{A}$, the transition probability distribution $\mathcal{P}(x^+|x,u)$ provides a stochastic representation of the unknown system dynamics. Since the underlying system $\Sigma$ is deterministic, $\mathcal{P}(\cdot|x,u)$ reduces to a Dirac delta distribution centered exactly at $f(x, u)$ yielding the next state $x^+ \in \mathcal{S}$. Finally, $r:\mathcal{S}\times\mathcal{A}\rightarrow\mathbb{R}$ is the customized reward function, and $\gamma\in(0,1)$ is the discount factor. Under this formulation, the control synthesis objective translates to finding a stochastic policy $\pi(u|x)$ that, for any state $x \in \mathcal{S}$, selects an optimal continuous control input $u \in \mathcal{A}$ to maximize the cumulative expected reward. Because the transition map $f(x,u)$ is unknown to the agent, the agent learns the optimal policy by interacting with the system and collecting transition samples $(x, u, x^+)$ generated by \eqref{eqn: disc_mathematical_model}. 

\subsection{Objective}
Now we introduce the problem statement considered in this work.

\begin{tcolorbox}
\begin{problem}
\label{prob:main_problem}
Given an unknown discrete-time system $\Sigma$ \eqref{eqn: disc_mathematical_model} equivalently represented by the MDP $\mathcal{M}$. The objective is to synthesize a history-free, RL-based policy $\pi(u|x)$ that ensures the satisfaction of full class of STL specifications given by \eqref{eqn: stl_eqn}.
\end{problem}
\end{tcolorbox}

In this work, we solve the above problem statement using the concept of spatiotemporal tubes, which is explained next.

\section{Proposed Methodology}
In this section, we first describe how we use concepts from spatiotemporal tube-based control to construct time-varying rewards that capture the robust satisfaction of full class of STL specifications. % Subsequently, we discuss the choice of soft-actor-critic (SAC) algorithm in the paper.

\subsection{Spatiotemporal Tubes}
To satisfy the given STL specification $\phi$, we use Spatiotemporal Tubes (STTs) which is defined next.

\begin{definition}[Spatiotemporal Tube for STL Task]
\label{defn: stt}
For an STL task $\phi$ given in \eqref{eqn: stl_eqn}, defined over the time interval $[0,t_f]_{\mathbb{N}}$, a time-varying compact set $\Gamma(t) \subset \mathbb{R}^n$ is called a valid Spatiotemporal Tube (STT) for $\phi$ if its interior is strictly non-empty for time interval $t \in [0,t_f]_{\mathbb{N}}$, and the following condition holds:
\begin{equation}
    \rho\hspace{-0.2em}^\phi\hspace{-0.2em}(\hspace{-0.1em}x\hspace{-0.1em},\hspace{-0.1em}t\hspace{-0.1em}) \hspace{-0.2em}>\hspace{-0.2em} 0,  \forall x \hspace{-0.2em}:\hspace{-0.2em} [0, t_f]_\mathbb{N} \hspace{-0.2em} \rightarrow \hspace{-0.2em}\mathbb{R}^n \text{ s.t.} x(\tau)\hspace{-0.2em} \in \hspace{-0.2em}\Gamma(\tau), \forall \tau \hspace{-0.2em}\in\hspace{-0.2em} [0, t_f]_\mathbb{N}. \label{eqn: unified_stt}
\end{equation}
\end{definition}

Spatiotemporal tubes $\Gamma(t)$ can be constructed using different geometric parameterizations- $(i)$ the hyperrectangular method \cite{STT_approx_free} or $(ii)$ the spherical ball method \cite{das2025full_class}, which share a fundamental mathematical structure as explained below:
\begin{itemize}
    \item \textbf{Hyper-rectangular STT \cite{STT_approx_free}:} In hyper-rectangular method, we construct the STT characterized by upper and lower bounds $\gamma_{u}^i(t)$ and $\gamma_{l}^i(t)$ respectively, such that $\gamma_{u}^i(t) > \gamma_{l}^i(t)$ and the spatiotemporal tube is given as $\Gamma(t) := \prod_{i=1}^n [\gamma_{l}^i(t), \gamma_{u}^i(t)]$,  for each state dimension $i$, for all $t \in [0,t_f]_{\mathbb{N}}$. %\in \{1,\ldots,n\}
    \item \textbf{Spherical STT \cite{das2025full_class} :} In the spherical method, we construct the STT characterized by a center trajectory $c: [0,t_f]_{\mathbb{N}} \rightarrow \mathbb{R}^n$ and a strictly positive radius $r: [0,t_f]_{\mathbb{N}}\rightarrow \mathbb{R}^+$ and the spatiotemporal tube is given as $\Gamma(t) := \mathcal{B}(c(t), r(t))$, for all $t \in [0,t_f]_{\mathbb{N}}$.
\end{itemize}
By defining the STT as a bounding region that inherently satisfies the robustness semantics of $\phi$, we, in the following lemma, guarantee specification satisfaction by confining the system trajectory to this region.

\begin{lemma}[STL Satisfaction via STT Confinement]
Let $\Gamma(t)$ be a valid spatiotemporal tube for the STL specification $\phi$ over the interval $[0, t_f]_\mathbb{N}$ as per Definition~\ref{defn: stt}. If a system trajectory $x(\tau)$ is constrained within the STT such that
\begin{equation}
    x(\tau) \in \Gamma(\tau), \quad \forall \tau \in [0, t_f]_\mathbb{N}, \label{eqn: confinement}
\end{equation}
then the STL specification $\phi$ is satisfied, i.e., $x \models \phi$.
\end{lemma}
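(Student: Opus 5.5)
The lemma follows almost directly from Definition~\ref{defn: stt} combined with the soundness of the robustness semantics. The plan has three short steps.

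\textbf{Step 1: instantiate the definition.} Take the given trajectory $x:[0,t_f]_\mathbb{N}\to\mathbb{R}^n$ satisfying the confinement condition \eqref{eqn: confinement}. This is exactly the hypothesis of the universally quantified statement \eqref{eqn: unified_stt} in Definition~\ref{defn: stt}. Since $\Gamma$ is a valid STT for $\phi$, we immediately obtain $\rho^\phi(x,t)>0$. Satisfaction $x\models\phi$ is defined as $(x,0)\models\phi$, so the relevant instance is $t=0$, giving $\rho^\phi(x,0)>0$.

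\textbf{Step 2: pass from positive robustness to Boolean satisfaction.} For this I will invoke the soundness property recalled in the preliminaries: $\rho^\phi(x,t)>0$ implies $(x,t)\models\phi$ \cite{donze2010robust}. If a self-contained argument is wanted, I would prove by structural induction on $\phi$, following the grammar \eqref{eqn: stl_eqn} and the recursive semantics \eqref{eq:3a}--\eqref{eq:3f}, the stronger two-sided statement
\begin{equation*}
\rho^\phi(x,t)>0 \Rightarrow (x,t)\models\phi, \qquad \rho^\phi(x,t)<0 \Rightarrow (x,t)\not\models\phi .
\end{equation*}
The cases go as follows.
\begin{itemize}
\item \emph{Predicates:} immediate, since $\rho^{\mathsf p}=h(x(t))$.
\item \emph{Negation:} the two-sided form is needed here, because $\rho^{\neg\phi}=-\rho^\phi$ swaps the two implications.
\item \emph{Conjunction:} the $\min$ is positive exactly when both arguments are positive, and negative when one argument is.
\item \emph{Until:} a positive $\max$ yields a witness $t_1$ at which both inner terms are positive, and the inductive hypothesis then gives the required satisfactions. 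A negative value means that no $t_1$ has both inner terms nonnegative, and this rules out satisfaction.
\end{itemize}

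\textbf{Step 3: the expected obstacle.} The main delicate point is the discrete-time setting. The STL semantics are stated for continuous-time signals, while trajectories and tubes here live on $[0,t_f]_\mathbb{N}$. I would interpret all temporal intervals $[t+a,t+b]$ as their intersections with $\mathbb{N}$. Two consequences of that choice need checking. First, the $\max$/$\min$ in the robustness definitions are taken over finite sets, so they are attained and the witness in the until case exists. Second, every time index referenced by $\phi$ must lie in $[0,t_f]_\mathbb{N}$, which is implicit in the phrase ``defined over $[0,t_f]_\mathbb{N}$'' in the definition. I would also note a mismatch between the paper's until robustness \eqref{eq:3f} and the Boolean semantics, where the roles of $\phi_1$ and $\phi_2$ appear swapped. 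I would handle this by proving soundness for the standard consistent pairing, in which the robustness and satisfaction clauses are matched.

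With soundness established, chaining Step 1 and Step 2 gives $(x,0)\models\phi$, that is, $x\models\phi$.
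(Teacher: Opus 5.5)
Your proposal is correct and follows the paper's argument. The paper instantiates condition \eqref{eqn: unified_stt} of Definition~\ref{defn: stt} on the confined trajectory to obtain positive robustness and stops there, leaving implicit the soundness fact $\rho^\phi(x,0)>0 \Rightarrow (x,0)\models\phi$ from the preliminaries; your Step 2 makes that fact explicit, and your structural induction, discrete-time reading, and remark that $\phi_1$ and $\phi_2$ are swapped in \eqref{eq:3f} relative to the Boolean until clause are valid refinements rather than a different route.
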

\begin{proof}
    The proof follows directly from Definition 1. By the premise of the Lemma, the trajectory $x(\tau)$ satisfies the confinement condition \eqref{eqn: confinement}. Consequently, \eqref{eqn: unified_stt} of Definition~\ref{defn: stt} dictates that the robustness metric $\rho^\phi(x) > 0$. This completes the proof. 
    % By the semantics of STL, a strictly positive robustness metric implies $x \models \phi$.
\end{proof}
% Owing to space constraints, we do not discuss the design of STTs in this work. 
Since the design of STTs is a well-established technique, we refer readers to various data-driven approaches for constructing STTs, including the hyper-rectangular method \cite{STT_approx_free}, the spherical method \cite{das2025full_class}, physics-informed neural networks \cite{basu2026learningspatiotemporaltubesclass}, and others.

In the next subsection, we introduce the soft-actor-critic method as the choice for the RL algorithm to synthesize the RL-based controller.

\subsection{Soft Actor-Critic (SAC)}
While traditional value-based RL algorithms such as Deep Q-Networks (DQN) have seen success in various domains \cite{10354421}, they inherently require discrete action spaces. Discretizing the continuous input space $U$ of robotic systems often leads to the curse of dimensionality, poor scalability, and unstable learning dynamics \cite{11236998}. Therefore, to natively handle the continuous action space $\mathcal{A}$ mapped from our physical system, we adopt the Soft Actor-Critic (SAC) algorithm \cite{pmlr-v80-haarnoja18b}.

SAC is an off-policy algorithm built on the maximum-entropy reinforcement learning framework. Unlike conventional RL that maximizes only the expected cumulative reward, SAC optimizes the policy to maximize both the expected discounted return and the policy's entropy. The resulting objective is:
\begin{equation*}
\eta(\pi)=\mathbb{E}\left[\sum_{t=0}^{t_f}\gamma^{t}\left(r(x_t,u_t)+\alpha\mathcal{H}(\pi(u_t|x_t))\right)\right],
\end{equation*}
where $x_t$ and $u_t$ represent the state and input at time instant $t\in [0,t_f]_\mathbb{N}$, and the expectation is taken over trajectories generated by the policy $\pi$ and the transition dynamics $\mathcal{P}$. The term $\mathcal{H}(\pi(u_t|x_t))=-\mathbb{E}_{u_t\sim\pi}[\log\pi(u_t|x_t)]$ denotes the Shannon entropy of the policy, and $\alpha>0$ is a temperature parameter controlling the trade-off between reward maximization and exploration.

SAC employs neural networks to approximate the policy (actor network $\pi_{\phi}(u|x)$) and the soft action-value function (critic network $Q_{\theta}(x,u)$). During training, the critic network minimizes the soft action-value Bellman equation, while the actor network updates to maximize the entropy-regularized expected return, ensuring stable and robust convergence for obtaining continuous control. We refer readers to \cite{pmlr-v80-haarnoja18b} for more details about the SAC method.

In the next section, we describe the use of tube-based concepts to construct time-varying rewards that capture robust satisfaction of STL specifications.
\subsection{Construction of Reward Functions using STTs}

Given the construction of the spatiotemporal tubes for the STL specification, we design the reward function for the SAC algorithm based on the geometric parameterization of the tube. To ensure stable learning and avoid gradient masking in deep reinforcement learning, the reward is formulated to penalize the continuous distance outside the safety boundaries.

For the hyperrectangular STT method, where the tube is defined by independent lower and upper bounds $[\gamma_{l}^i(t), \gamma_{u}^i(t)]$ for each dimension $i \in \{1, \dots, n\}$, the reward function is defined as:
% \begin{equation}
%     r'(x_t,u_t, t) =  \min_{i=1}^n k_i \Big( x_{t}^i - \gamma_{l}^i(t), \gamma_{u}^i(t) - x_{t}^i \Big),
%     \label{eq:reward_rect}
% \end{equation}
\begin{equation}
    r'(x_t,u_t, t) =  \sum_{i=1}^n k_i \min \Big( x_{t}^i - \gamma_{l}^i(t), \gamma_{u}^i(t) - x_{t}^i \Big),
    \label{eq:reward_rect}
\end{equation}
where for each $i \in \{1,2,\ldots,n\}$, $k_i > 0$ is a user-defined scaling gain and $x_{t}=[x_t^1,\ldots,x_t^n]\in \mathcal{S}$ and $u_t\in \mathcal{A}$ is the state and input at time instant $t\in [0,t_f]_\mathbb{N}$.
Alternatively, for the spherical ball STT method, where the tube is characterized by a time-varying center $c(t) \in \mathbb{R}^n$ and a radius $r(t) \in \mathbb{R}^+$, the reward function is defined using the Euclidean distance:
\begin{equation}
    r'(x_t,u_t, t) = k \Big( r(t) - \|x_t - c(t)\|_2 \Big),
    \label{eq:reward_sph}
\end{equation}
where $k\in \mathbb{R}^+$, and $x_t\in \mathcal{S}$ and $u_t\in \mathcal{A}$ is the state and input at the time instant $t\in [0,t_f]_\mathbb{N}$.
Both reward formulations in \eqref{eq:reward_rect} and \eqref{eq:reward_sph} share the same fundamental property: the reward provides positive reinforcements at time $t$ if the state $x_t$ is strictly within the STT boundaries, while penalizing any state violations beyond the STT boundaries. Furthermore, the reward smoothly reaches its maximum when the state perfectly tracks the center of the tube, encouraging the RL agent to maximize robustness.

Our method fundamentally differs from funnel-based approaches \cite{10354421}, which enforce STL specifications by artificially constraining the scalar robustness value $\rho^\phi(x_t)$ within exponentially decaying bounds. Instead of operating on an abstract robustness value, our approach directly constrains the multidimensional system state $x_t \in \mathcal{S}$ within the physical geometry of the STT, $\Gamma(t)$. This provides a significant advantage for Reinforcement Learning: the temporal and logical complexities of the STL formula (such as overlapping time intervals and nested operators) are entirely abstracted into the geometric boundary of the tube. By simply evaluating its instantaneous position relative to $\Gamma(t)$, the agent ensures temporal satisfaction without needing to compute complex robustness metrics or store a history of states, effectively resolving the limitations seen in approaches like \cite{7799279,pmlr-v120-venkataraman20a}.

\subsection{Time-Aware Soft Actor-Critic}
In this section we introduce the time-aware Soft Actor-Critic algorithm which incorporates the tube-based time-dependent reward function \eqref{eq:reward_rect} or \eqref{eq:reward_sph}, which is a function of not only the state and action but also the current time $t$. We define the modified MDP as $\mathcal{M}' = (\mathcal{S}, \mathcal{A}, \mathcal{P}, r', \gamma)$, where $r': \mathcal{S} \times \mathcal{A} \times \mathbb{N} \to \mathbb{R}$ is the new time-varying reward function. The stochastic policy is now defined as $\pi': \mathcal{S} \times [0,t_f]_\mathbb{N} \to \mathcal{A}$. The Markov property of the transition probability function remains intact because the transition to $x_{t+1}$ still depends strictly on $(x_t, u_t)$, and $u_t$ depends on $x_t$ and the current time $t$. Since the reward depends on time, the soft action-value function is also explicitly dependent on time and is recursively defined as follows:
\begin{align}
    Q\hspace{-0.1em}^{\pi}\hspace{-0.1em}(\hspace{-0.1em}x_t\hspace{-0.1em},\hspace{-0.1em} u_t\hspace{-0.1em},\hspace{-0.1em} t\hspace{-0.1em}) &\hspace{-0.25em}=\hspace{-0.25em} \mathbb{E} \hspace{-0.15em}\left[\hspace{-0.15em} \sum_{k=t}^{t_f} \hspace{-0.25em}\gamma^{k\hspace{-0.1em}-\hspace{-0.1em}t}\hspace{-0.2em} \Big( \hspace{-0.1em}r'\hspace{-0.1em}(x_k\hspace{-0.1em},\hspace{-0.1em} u_k\hspace{-0.1em},\hspace{-0.1em} k) \hspace{-0.2em}+\hspace{-0.2em} \alpha \mathcal{H}(\pi'(\cdot | x_k\hspace{-0.1em},\hspace{-0.1em} k)) \Big)\hspace{-0.1em} \Bigg|\hspace{-0.1em} x_t,\hspace{-0.1em} u_t,\hspace{-0.1em} t \hspace{-0.1em}\right] \nonumber \\
    % &= \mathbb{E} \Biggl[ r'(s_t, a_t, t) + \gamma \Big( Q^{\pi}(s_{t+1}, a_{t+1}, t+1) \nonumber\\& \qquad \quad+ \alpha \mathcal{H}(\pi'(\cdot | s_{t+1}, t+1)) \Big) \Bigg| s_t, a_t, t \Biggr].
\end{align}

Furthermore, the policy optimization directly depends on time, as the actor network updates its parameters by maximizing this time-dependent soft Q-value function. To achieve this practically, the time step $t$ is concatenated with the system state $x_t$ to form an augmented state representation $\tilde{x}_t = [x_t^\top, t]^\top$, which is then fed into both the actor and critic neural networks. The proposed method is summarized in Algorithm \ref{alg:tsac}. %It is important to note that the unknown system dynamics $f(x,u)$ defined in \eqref{eqn: disc_mathematical_model} are never explicitly utilized in the policy optimization equations. Instead, the influence of $f(x,u)$ is inherently captured through the transition samples $(\tilde{x}_t, u_t, r_t, \tilde{x}_{t+1})$ collected from the environment (Line 8, Algorithm \ref{alg:tsac}), allowing the time-aware SAC agent to learn a robust control policy in a completely model-free manner.

\begin{algorithm}[htbp]
\caption{Time-Aware Soft Actor-Critic for STL Satisfaction}
\label{alg:tsac}
\begin{algorithmic}[1]
\Require Initial policy parameters $\phi$, Q-value parameters $\theta_1, \theta_2$, empty replay buffer $\mathcal{D}$, mission time horizon $t_f$
\State Set target parameters $\bar{\theta}_1 \leftarrow \theta_1, \bar{\theta}_2 \leftarrow \theta_2$
\For{each episode}
    \State Initialize environment and obtain initial state $x_0$
    \State Initialize time step $t = 0$
    \While{$t \leq t_f\textcolor{blue}{-1}$ and episode is not terminated}
        \State Construct augmented state $\tilde{x}_t = [x_t^\top, t]^\top$
        \State Sample action $u_t \sim \pi_\phi(\cdot | \tilde{x}_t)$
        \State Execute $u_t$ in the environment and observe $x_{t+1}$ generated by the unknown dynamics $f(x_t, u_t)$
        \State Calculate reward $r_t = r'(x_t, u_t, t)$ from \eqref{eq:reward_rect} or \eqref{eq:reward_sph}%using STT boundaries
        \State Construct augmented next state $\tilde{x}_{t+1} \hspace{-0.25em}=\hspace{-0.25em} [x_{t+1}^\top, t\hspace{-0.2em}+\hspace{-0.2em}1]^\top$
        \State Store transition $(\tilde{x}_t, u_t, r_t, \tilde{x}_{t+1})$ in $\mathcal{D}$
        \If{it is time to update}
            \State Sample a mini-batch of transitions $B$ from $\mathcal{D}$
            \State Update critic networks $\theta_1, \theta_2$ using soft Bellman residual on $B$
            \State Update actor network $\phi$ to maximize time-dependent $Q^{\pi}$ on $B$
            \State Update temperature parameter $\alpha$ (if using auto-tuning)
            \State Update target networks: $\bar{\theta}_i \leftarrow \tau \theta_i + (1 - \tau)\bar{\theta}_i$ for $i \in \{1, 2\}$
        \EndIf
        \State $t \leftarrow t + 1$
        % \State $x_t \leftarrow x_{t+1}$
    \EndWhile
\EndFor
\end{algorithmic}
\end{algorithm}

The next section discusses the results obtained using our time-aware SAC method.

\section{Results And Discussion}\label{results}
We validate our framework on three different case studies-(i) Turtlebot (ii) Cartpole system (iii) Spacecraft. Training and simulations are conducted on a computer equipped with an AMD Ryzen 9 5950x processor and 128 GB RAM.

\subsection{Differential drive robot}
We validate the proposed framework by conducting an experiment on the differential drive robot with the dynamics given by \cite{das2025full_class}, with linear velocity $v$ and angular velocity $\omega$ as inputs. We design an STT-based reward structure for the time-aware SAC method with $t_f=35$ sec, to enforce the following STL specification in a 2D workspace as shown in Fig~\ref{fig: turtlebot_3d}:
$\phi_1=\square_{[0,35]} {S}\land  \square_{[15,20]} G_1 \land \lozenge_{[33,35]} G_2 \land \square_{[0,20]} \neg O_1\land\square_{[20,30]} \neg O_2,$
where ${S}=[0,30]\times[0,20]$, $G_1=[16,19]\times[10,13]$, $G_2=[25,30]\times[1,4]$, $O_1=[10,12]\times[0,15], O_2=[22,23]\times[6,7]$, with input constraints $\mathcal{A}=[-15,15]\times[-15,15]$. This STL specification describes a task for the robot to always stay within the set $S$, stay in the region $G_1$ within the time interval $[15,20]$ seconds, and then finally reach $G_2$ within the time interval $[33,35]$ seconds, and avoiding obstacles the entire time duration of $[0,35]$ seconds. Degree-6 polynomial tubes for this specification were generated using \cite{das2025full_class} in approximately 1 minute. RL training phase required approximately 1 hour.

\textbf{Discussion:} The Turtlebot3 case study demonstrates the capability of our approach to handle complex sequential STL tasks ($\phi_1$) under strict non-holonomic dynamics and input constraints ($\mathcal{A}$). As summarized in Table \ref{tab:comparison}, existing control synthesis frameworks ranging from abstraction-based tools \cite{rungger2016scots} and history dependent RL \cite{7799279} to analytical tube-based methods \cite{das2025full_class, STT_approx_free} and specialized differential drive controllers \cite{das2026temporalreachavoidstaycontroldifferential} struggle to simultaneously address the full class of STL, physical actuator limits, and non-holonomic dynamics without relying on memory-intensive history storage or restrictive structural assumptions. Even though methods like \cite{das2026temporalreachavoidstaycontroldifferential} handle differential drive dynamics, they fail to meet the input constraint limit, unlike ours. By mapping the specification to time-varying geometric boundaries and utilizing bounded actor networks, our time-aware SAC method natively satisfies all these requirements in a history-free manner.

\begin{table}[htbp]
    \centering
    \caption{Comparison with other methods in literature}
    \label{tab:comparison}
    \renewcommand{\arraystretch}{1.0} 
    \begin{tabularx}{\columnwidth}{@{} X c c c c @{}}
        \hline
        \textbf{Method} & \makecell{\textbf{Full STL} \\ \textbf{Class}} & \makecell{\textbf{Input} \\ \textbf{Bounds}} & \makecell{\textbf{History-} \\ \textbf{Free}} & \makecell{\textbf{Non-} \\ \textbf{Holonomic}} \\
        \hline
        Abstraction \cite{rungger2016scots} & $\times$ & \checkmark & \checkmark & \checkmark \\
        History RL \cite{7799279} & $\times$ & \checkmark & $\times$ & \checkmark \\
        F-MDP \cite{pmlr-v120-venkataraman20a} & $\times$ & \checkmark & \checkmark & \checkmark \\
        Funnel-Analyt. \cite{LINDEMANN2021100973} & $\times$ & $\times$ & \checkmark & $\times$ \\
        Funnel-RL \cite{10354421} & $\times$ & \checkmark & \checkmark & \checkmark \\
        Analyt. STT \cite{das2025full_class, STT_approx_free} & \checkmark & $\times$ & \checkmark & $\times$ \\
        \textbf{Ours} & \textbf{\checkmark} & \textbf{\checkmark} & \textbf{\checkmark} & \textbf{\checkmark} \\
        \hline
    \end{tabularx}
\end{table}

\begin{figure}
    \centering
    \includegraphics[trim=0cm 1.5cm 0.5cm 0cm, clip, width=0.9\linewidth]{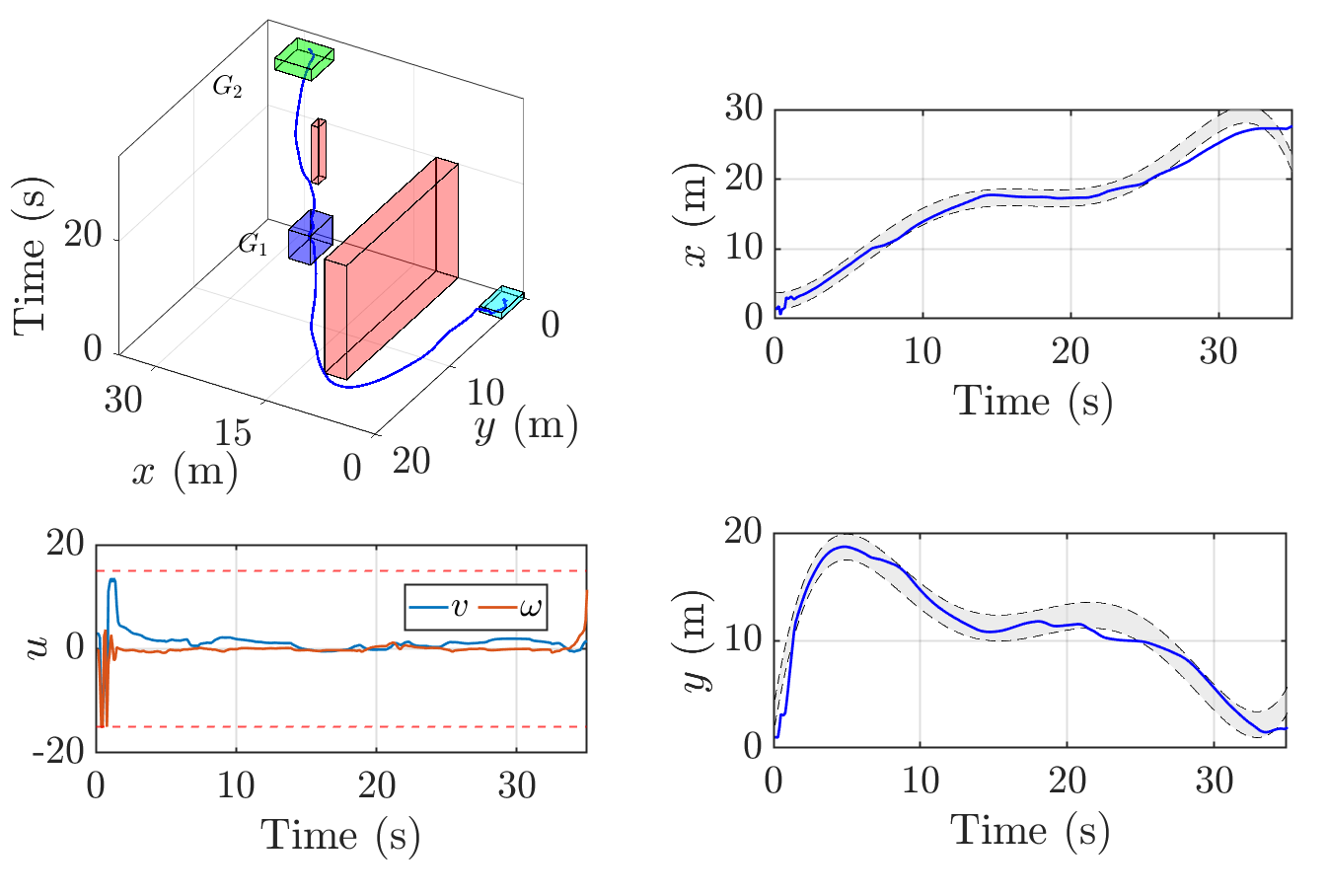}
    \caption{Turtlebot satisfying $\phi_1$, by reaching and staying in the goal region $G_1$ (blue region) within $[15,20]$ seconds and eventually region $G_2$ (Green) within $[33,35]$ seconds. The trained RL policy tries to keep the trajectories inside the circular STT, while satisfying the input constraints.}
    \label{fig: turtlebot_3d}
\end{figure}

\subsection{Cart-Pole}
This case study demonstrates the practicality of our approach by implementing the tube-based reward shaping for the underactuated cart-pole dynamics \cite{cartpole}, %given by:
% \begin{align*}
% \ddot{\theta} &= \frac{g\sin\theta-\displaystyle \cos\theta\left(\frac{F+m_p l\dot{\theta}^{\,2}\sin\theta}{m_c+m_p}\right)}{l\left(\frac{4}{3}-\frac{m_p\cos^2\theta}{m_c+m_p}\right)},\\
% \ddot{x}&=\frac{F+m_p l(\dot{\theta}^{\,2}\sin\theta- \ddot{\theta}\cos\theta)}{m_c+m_p},
% \end{align*}
 where $x$ and $\theta$ are the cart's position and the pole's angle with the vertical, respectively.%, while $\dot x$ and $\dot\theta$ denote their corresponding rates of change. The symbols $g$, $l$, $m_p$, and $m_c$ represent the acceleration due to gravity, the half-length of the pole, the mass of the pole, and the mass of the cart, respectively. 
 The control input $F$ is the horizontal force applied to the cart. We design a hyperrectangular STT-based reward structure for the time-aware SAC method with a mission horizon of $t_f=20$ s. Assuming the system starts near the downward stable equilibrium ($\theta \approx \pi$), we enforce the following STL specification:
% \begin{align}
%     \phi_2 &= \square_{[0,20]}(-6 \leq x \leq 6) \land \square_{[5,20]}\left(\vert \theta \vert \leq 15^\circ \right) \nonumber \\
%     &\quad \land \square_{[10,12]}(4 \leq x \leq 5) \land \square_{[18,20]}(-4 \leq x \leq -3). \label{eqn: spec_phi_case_study_2}
% \end{align}
   $ \phi_2 = \square_{[0,20]}(-6 \leq x \leq 6) \land \square_{[5,20]}\left(\vert \theta \vert \leq 15^\circ \right)\land \square_{[10,12]}(4 \leq x \leq 5) \land \square_{[18,20]}(-4 \leq x \leq -3).$
The objective of the STL task $\phi_2$ is to first execute a swing-up maneuver, stabilizing the pole vertically within an angle tolerance of $\pi/12$ rad ($15^\circ$) by $t=5$ s, and maintaining this balance for the remainder of the mission. Simultaneously, the cart must navigate to the target region $G_1 = [4, 5]$ during the interval $[10, 12]$ s, and subsequently move to a second region $G_2 = [-4, -3]$ during the interval $[18, 20]$ s, with an input bound of $F\in[-20,20]N$. The cart's position must strictly remain within the global workspace bound of $[-6, 6]$ throughout the entire $20$-second episode. Degree-6 polynomial tubes for this specification were generated using \cite{STT_approx_free} in approximately 25 seconds. RL training phase required approximately 15 minutes.

\textbf{Discussion:} As illustrated in Fig.~\ref{fig: cartpole}, our proposed framework successfully satisfies the complex STL specification for a highly underactuated system. A critical observation from the $x$-axis trajectory is that a slight excursion outside the STT boundaries is noticed during the $[10, 12]$ s interval. Rather than a critical violation, this highlights a significant practical advantage of our approach over traditional analytical tube-based methods. In standard barrier-based or optimization-driven frameworks, minor boundary violations due to system noise often lead to mathematical singularities (e.g., logarithmic functions yielding complex values) or strict solver infeasibility, resulting in complete controller failure. In contrast, our time-aware SAC framework maintains a continuous, geometry-aware reward gradient even when the system momentarily exits the tube. Driven by inherent reward maximization, the policy gracefully recovers, smoothly steering the system back toward the tube's center to ensure overall specification satisfaction and the reward is maximized. This inherent robustness to transient boundary violations offers a distinct operational advantage over strictly analytical approaches, such as \cite{das2026prescribedperformancecontrolunknown} where a feasibility condition needs to be satisfied. However, it is important to acknowledge that in strictly safety-critical environments, even transient excursions could theoretically lead to collisions with obstacles. Establishing formal, hard safety guarantees to completely avoid such violations in terms of spatial and temporal robustness within this RL framework is a direction for future work.

\begin{figure}[htbp]
    \centering
    \includegraphics[width=1\linewidth]{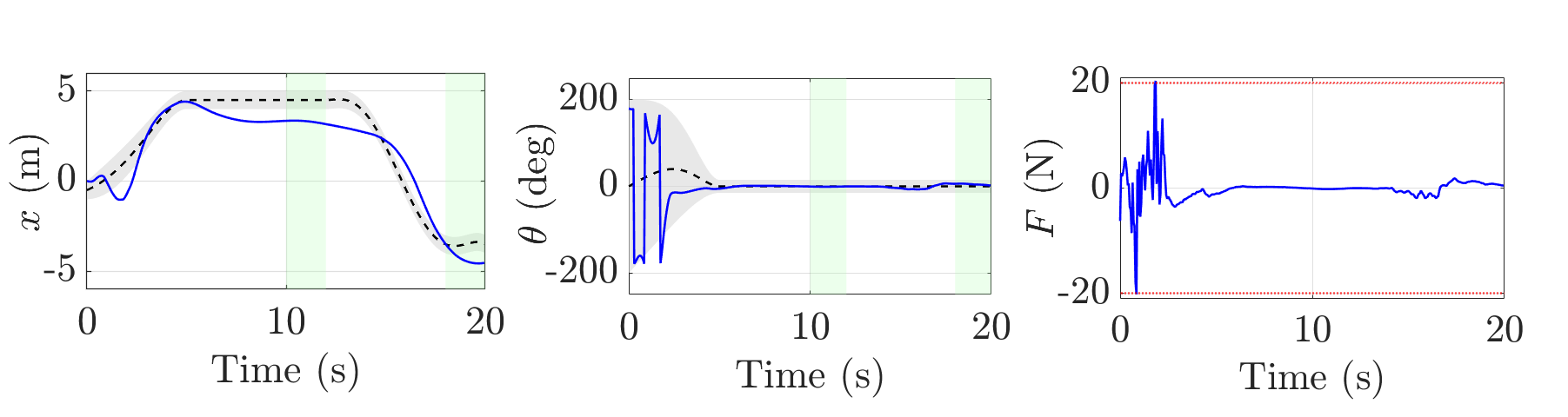}
    \caption{Cart-pole satisfying $\phi_2$ by reaching and staying in the region $G_1 = [4, 5]$ within $[10, 12]$ s, and eventually navigating to the region $G_2 = [-4, -3]$ within $[18, 20]$s, while balancing the pole angle within $[-15^\circ, 15^\circ]$ after the initial swing-up phase between [0,5] s. The trained time-aware SAC policy successfully confines the cart-pole's state within the STT to guarantee robust satisfaction of the spatial and temporal constraints.}
    \label{fig: cartpole}
\end{figure}

\subsection{Spacecraft model}
We validate the proposed framework by applying it to a spacecraft rendezvous problem, using the discrete-time dynamics from \cite{STT_approx_free}. The STL specification, adapted from the baseline study in \cite{STT_approx_free}, is given by:
$\phi_3 = S \implies \lozenge_{[7,8]}(T_1 \lor T_2) \land \lozenge_{[14,15]}G \land \square_{[0,15]}\lnot O,$
% \begin{equation}
%     \phi_3 = S \implies \lozenge_{[7,8]}(T_1 \lor T_2) \land \lozenge_{[14,15]}G \land \square_{[0,15]}\lnot O,
% \end{equation}
where $S = [0,0.6]\times[0,0.6]\times[0.4,1]$, $T_1 = [0.8,1.4]\times[1.4,2]\times[1.4,2]$, $T_2 = [2,2.6]\times[1.4,2]\times[1.4,2]$, and $G = [2.6,3.2]\times[2.6,3.2]\times[2.6,3.2]$ represent the start, intermediate target, and final goal regions, respectively. The set $O$ denotes the unsafe obstacle region. This formula dictates a sequential task: starting from $S$, the spacecraft must reach either $T_1$ or $T_2$ during the interval $[7, 8]$ s, and subsequently navigate to $G$ within $[14, 15]$ s, all while continuously avoiding $O$ over the entire $15$-second horizon. Degree-5 polynomial tubes for this specification were generated using \cite{STT_approx_free} in $0.91$ s. While the RL training phase required approximately $15$ minutes compared to the $1\mu s$ (per step substitution) needed for the analytical controller synthesis in \cite{STT_approx_free} our proposed time-aware SAC approach provides a critical operational advantage. 

\textbf{Discussion:} As shown in Fig.~\ref{fig: spacecraft_states_plot}, our trained policy produces state trajectories that successfully satisfy the STT boundaries, closely resembling the geometric response of the baseline. However, as illustrated in Fig.~\ref{fig: spacecraft_input_plots}, the analytical method in \cite{STT_approx_free} fails to respect the physical actuator limits of the spacecraft. Because our RL formulation inherently bounds the continuous action space, it successfully enforces the required control input constraint of $\lVert u \rVert \leq 700$ Nm.

\begin{figure}[htbp]
    \centering
    \includegraphics[width=1\linewidth]{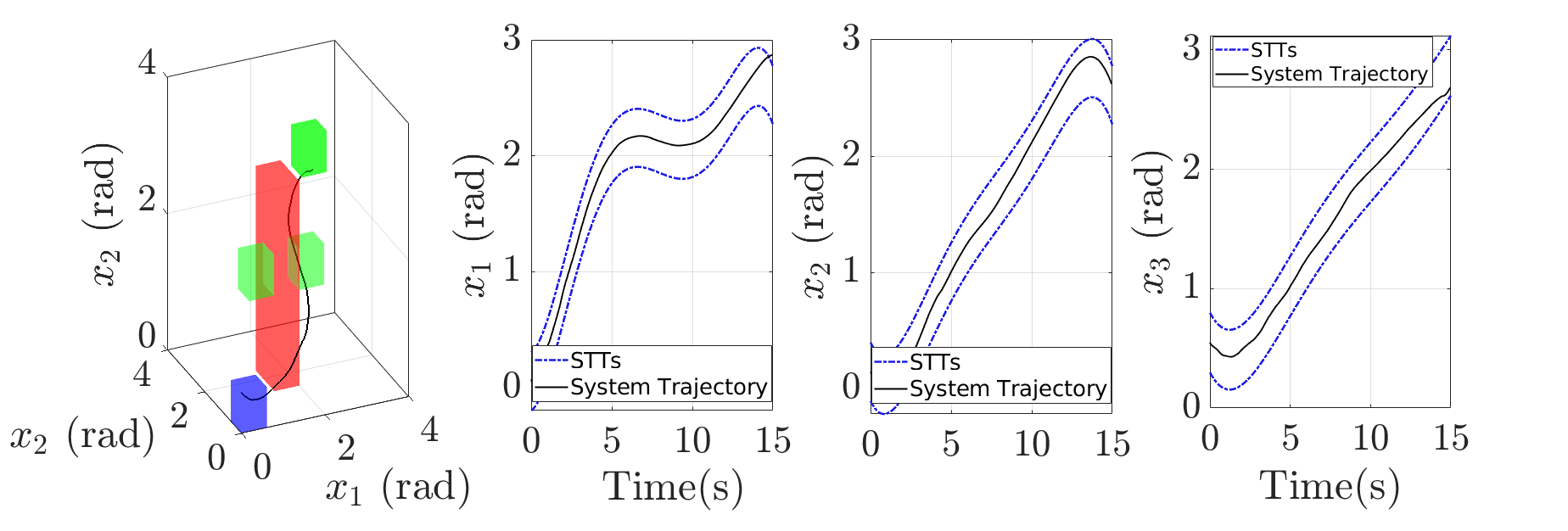}
    \caption{System trajectory and the generated STTs for the Spacecraft Case Study satisfying the specification $\phi_3$.}
    \label{fig: spacecraft_states_plot}
\end{figure}

\begin{figure}[htbp]
    \centering
    \includegraphics[width=0.5\linewidth]{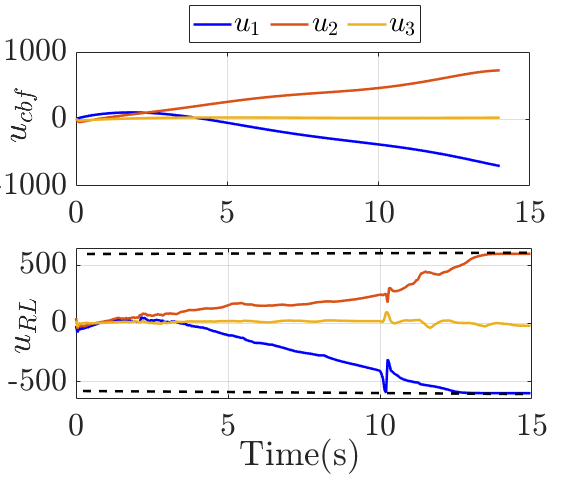}
    \caption{Comparison of the control input effort using the analytical approach from \cite{STT_approx_free} (Top) versus our proposed time-aware SAC RL approach (Bottom), demonstrating strict adherence to input constraints.}
    \label{fig: spacecraft_input_plots}
\end{figure}

\section{Conclusion and Future Work}\label{sec:conclusion}
This paper presented a history-free, time-aware RL framework using Spatiotemporal Tubes to satisfy a full class of complex STL specifications for input-constrained and underactuated systems and validated the method across diverse robotic platforms. Future work will integrate formal guarantees to preclude safety violations and extend the framework to multi-agent settings in dynamic environments.

\bibliography{reference.bib}

@article{das2026prescribedperformancecontrolunknown,
  title={Prescribed Performance Control of Unknown {E}uler-{L}agrange Systems Under Input Constraints},
  author={Das, Ratnangshu and Jagtap, Pushpak},
  journal={arXiv preprint arXiv:2507.01426},
  year={2026}
}

@inproceedings{donze2010robust,
  title={Robust satisfaction of temporal logic over real-valued signals},
  author={Donz{\'e}, Alexandre and Maler, Oded},
  booktitle={International conference on formal modeling and analysis of timed systems},
  pages={92--106},
  year={2010},
  organization={Springer}
}

@book{sutton2018reinforcement,
  title={Reinforcement learning: An introduction},
  author={Sutton, Richard S and Barto, Andrew G},
  year={2018},
  publisher={MIT press}
}

@InProceedings{pmlr-v80-haarnoja18b,
  title = 	 {Soft {A}ctor-{C}ritic: Off-Policy Maximum Entropy Deep Reinforcement Learning with a Stochastic Actor},
  author =       {Haarnoja, Tuomas and Zhou, Aurick and Abbeel, Pieter and Levine, Sergey},
  booktitle = 	 {Proceedings of the 35th International Conference on Machine Learning},
  pages = 	 {1861--1870},
  year = 	 {2018},
  volume = 	 {80},
  month = 	 {10--15 Jul},
  publisher =    {PMLR}
}

@InProceedings{10.1007/978-3-540-30206-3_12,
author="Maler, Oded
and Nickovic, Dejan",
editor="Lakhnech, Yassine
and Yovine, Sergio",
title="Monitoring Temporal Properties of Continuous Signals",
booktitle="Formal Techniques, Modelling and Analysis of Timed and Fault-Tolerant Systems",
year="2004",
publisher="Springer Berlin Heidelberg",
address="Berlin, Heidelberg",
pages="152--166",
isbn="978-3-540-30206-3"
}

@ARTICLE{10354421,
  author={Saxena, Naman and Gorantla, Sandeep and Jagtap, Pushpak},
  journal={IEEE Robotics and Automation Letters}, 
  title={Funnel-Based Reward Shaping for Signal Temporal Logic Tasks in Reinforcement Learning}, 
  year={2024},
  volume={9},
  number={2},
  pages={1373-1379},
  doi={10.1109/LRA.2023.3341775}}

@INPROCEEDINGS{7799279,
  author={Aksaray, Derya and Jones, Austin and Kong, Zhaodan and Schwager, Mac and Belta, Calin},
  booktitle={2016 IEEE 55th Conference on Decision and Control (CDC)}, 
  title={Q-Learning for robust satisfaction of signal temporal logic specifications}, 
  year={2016},
  volume={},
  number={},
  pages={6565-6570},
  doi={10.1109/CDC.2016.7799279}}

@InProceedings{pmlr-v120-venkataraman20a,
  title = 	 {Tractable {R}einforcement {L}earning of {S}ignal {T}emporal {L}ogic Objectives},
  author =       {Venkataraman, Harish and Aksaray, Derya and Seiler, Peter},
  booktitle = 	 {Proceedings of the 2nd Conference on Learning for Dynamics and Control},
  pages = 	 {308--317},
  year = 	 {2020},
  volume = 	 {120},
  month = 	 {10--11 Jun},
  publisher =    {PMLR}
}

@article{basu2026learningspatiotemporaltubesclass,
  title={Learning Spatiotemporal Tubes for Full Class of Signal Temporal Logic Tasks for Control of Unknown Systems under Input Constraints},
  author={Basu, Ahan and Das, Ratnangshu and Nath,Soumyodipta and Liu,Siyuan and Jagtap, Pushpak},
  journal={arXiv preprint arXiv:2607.07136},
  year={2026}
}

@ARTICLE{STT_approx_free,
  author={Das, Ratnangshu and Choudhury, Subhodeep and Jagtap, Pushpak},
  journal={IEEE Control Systems Letters}, 
  title={Approximation-Free Control for Signal Temporal Logic Specifications Using Spatiotemporal Tubes}, 
  year={2025},
  volume={9},
  number={},
  pages={1562-1567},
  doi={10.1109/LCSYS.2025.3579761}}

@ARTICLE{STT_MIMO,
  author={Das, Ratnangshu and Basu, Ahan and Jagtap, Pushpak},
  journal={IEEE Transactions on Automatic Control}, 
  title={Spatiotemporal Tubes for Temporal Reach-Avoid-Stay Tasks in Unknown Systems}, 
  year={2026},
  volume={71},
  number={1},
  pages={512-519},
  doi={10.1109/TAC.2025.3592723}}

@INPROCEEDINGS{STL_PPC,
  author={Lindemann, Lars and Verginis, Christos K. and Dimarogonas, Dimos V.},
  booktitle={IEEE 56th Annual Conference on Decision and Control (CDC)}, 
  title={Prescribed performance control for signal temporal logic specifications}, 
  year={2017},
  volume={},
  number={},
  pages={2997-3002},
  doi={10.1109/CDC.2017.8264095}}

@article{LINDEMANN2021100973,
title = {Funnel control for fully actuated systems under a fragment of signal temporal logic specifications},
journal = {Nonlinear Analysis: Hybrid Systems},
volume = {39},
pages = {100973},
year = {2021},
issn = {1751-570X},
doi = {https://doi.org/10.1016/j.nahs.2020.100973},
author = {Lars Lindemann and Dimos V. Dimarogonas}
}

@article{das2026temporalreachavoidstaycontroldifferential,
  title={Temporal Reach-Avoid-Stay Control for Differential Drive Systems via Spatiotemporal Tubes},
  author={Das, Ratnangshu and Basu, Ahan and Verginis, Christos and Das, Ratnangshu and Jagtap, Pushpak},
  journal={arXiv preprint arXiv:2512.05495},
  year={2026}
}

@ARTICLE{11236998,
  author={Gopalan, Aditya and Thoppe, Gugan},
  journal={IEEE Transactions on Automatic Control}, 
  title={Does {DQN} Learn?}, 
  year={2026},
  volume={71},
  number={4},
  pages={2482-2495},
  doi={10.1109/TAC.2025.3631342}}

@book{ogata1995discretetime,
  author    = {Katsuhiko Ogata},
  title     = {{D}iscrete-{T}ime {C}ontrol {S}ystems},
  edition   = {2nd},
  year      = {1995},
  publisher = {Prentice Hall},
  address   = {Englewood Cliffs, N.J.},
  isbn      = {978-0130342812}
}

@inproceedings{rungger2016scots,
  title={{SCOTS}: {A} tool for the synthesis of symbolic controllers},
  author={Rungger, Matthias and Zamani, Majid},
  booktitle={19th International Conference on Hybrid Systems: Computation and Control},
  pages={99--104},
  year={2016}
}

@article{das2025full_class,
  title={Control Barrier Functions for the Full Class of Signal Temporal Logic Tasks using Spatiotemporal Tubes},
  author={Das, Ratnangshu and Choudhury,Subhodeep and Jagtap, Pushpak},
  journal={arXiv preprint arXiv:2510.19595},
  year={2025}
}

@article{cartpole,
  title={Correct equations for the dynamics of the cart-pole system},
  author={Florian, Razvan V},
  journal={Center for Cognitive and Neural Studies (Coneural), Romania},
  volume={63},
  year={2007}
}

\end{document}